\documentclass[12pt]{article}
\usepackage{epsfig, graphics, subfigure}
\usepackage{amsfonts}
\usepackage{natbib}
\usepackage{amsmath,amsthm,amssymb,enumerate,mathrsfs}
\usepackage{eurosym}
\usepackage{titling}
\usepackage{appendix}
\usepackage{setspace}
\usepackage{float}

\newtheoremstyle{colon}%
{}
{}
{\itshape}%bodyfont
{}%indent
{\bfseries}%headfont
{:}%head punctuation
{ }%space after head
{}

\makeatletter
\let\@fnsymbol\@arabic
\makeatother
\usepackage{mathtools, bm}
\usepackage{amssymb, bm}
\usepackage{bbm}
\usepackage[latin1]{inputenc}

\def\bkR{{\rm I\kern-.17em R}}

\def \1n{1\hskip -3pt \mbox{N}}

\theoremstyle{colon}

\newtheorem{prop}{Proposition}

\newtheorem{Ass}{Assumption}
\newtheorem{coro}{Corollary}

\theoremstyle{definition}

\newtheorem{rem}{Remark}

\newtheorem{exm}{Example}
\usepackage{setspace}
\onehalfspacing

\begin{titlepage}

\title{A simple estimator of the correlation kernel matrix of a determinantal point process}
\author{Christian Gouri\'eroux\thanks{University of Toronto, Toulouse School of Economics and CREST. Email: gouriero@ensae.fr} \, and Yang Lu\thanks{Department of Mathematics and Statistics, Concordia University, Montreal, Canada. Email: yang.lu@concordia.ca}}

\end{titlepage}
\begin{document}

\maketitle

\textbf{Abstract}: The Determinantal Point Process (DPP) is a parameterized model for multivariate binary variables, characterized by a correlation kernel matrix. This paper proposes a closed form estimator of this kernel, which is particularly easy to implement and can also be used as a starting value of learning algorithms for maximum likelihood estimation. We prove the consistency and asymptotic normality of our estimator,  as well as its large deviation properties.  

\textbf{Keywords}: Determinantal Point Process, Identification, Correlation Kernel, Principal Minors, Large Deviation.

\vspace*{0.5em}
\textbf{Acknowledgment:} Gouri\'eroux gratefully acknowledges financial support of the ACPR chair ``Regulation and Systemic Risks" and the ANR project: ``From Machine Learning to Structural Econometrics with Discrete Data".  Lu thanks NSERC (grant RGPIN- 2021-04144) for financial support.  

\section{Introduction}
 
%\vspace{8cm}
Determinantal Point Process (DPP) is a flexible family of distributions for random sets defined on the finite state space $\{1,...,d\}$, or equivalently for multivariate binary variables.  This family is parameterized by either the L-ensemble kernel $\Sigma$, which is symmetric positive definite (SPD), or the correlation kernel matrix $K$, which is SPD, with eigenvalues lying strictly between 0 and 1. The literature has considered the maximum likelihood estimation (MLE) of $\Sigma$ and $K$ or its algorithmic analogues  \citep{affandi2014learning, brunel2017maximum, brunel2017rates}, but it has since been shown that $i)$ the likelihood function has at least $2^{d-1}$ global maxima due to the identifiability property of the DPP model; $ii)$ the likelihood function is highly non convex and hence suffers from issues like local minima and saddlepoints, whose numbers increase exponentially in dimension $d$ \citep{brunel2017rates, friedman2024likelihood}; $iii)$ the Hessian matrix at the true parameter value is close to singularity, especially at high dimensions \citep{brunel2017rates}; $iv)$ in computational complexity,  finding the MLE is considered a difficult problem \citep{kulesza2012learning, grigorescu2022hardness}.  The numerical performance of the MLE has been studied by \cite{hu2023asymptotic},  who report that both the deterministic Newton-Ralphson and the stochastic gradient descent optimization algorithms for the maximum likelihood estimation of $\Sigma$ can fail to converge, even in low dimensions such as $d=3$. 

Another approach is to use moment-based estimators or algorithms.  In particular,  \cite{urschel2017learning} propose the first such algorithm, which can be applied to quite general cases, including when many entries of $K$ are zero. %\footnote{See also the Principal Minor Assignment algorithm of \cite{rising2015efficient},  which recovers $K$ from its principal minors using graph theory.} 
However,  this method also has several downsides.  First,  the method relies on quite complicated graph theory and the number of moments needed depends on the unknown value of the kernel. Moreover, the consistency and asymptotic normality of their estimator is not known.  Our paper shows that under some mild conditions,  such as one of the row of the kernel matrix only contains nonzero terms\footnote{That is to say,  one binary variable is correlated with all other binary variables. },  an alternative (method of moment) estimation of the kernel correlation matrix exists,  and is particularly easy to implement.  Moreover,  we were also able to prove standard asymptotic properties of this estimator,  such as the strong consistency,  the asymptotic normality,  as well as large deviation properties.  It can be used either as a standalone estimator, or a starting point of the maximum likelihood estimation algorithms.  

 The plan of the paper is the following. In Section 2, we introduce the DPP model and its parametrization by the correlation kernel matrix $K$. The identification issues are discussed in Section 3 for the unconstrained DPP model. Indeed, the distribution depends on the kernel by means of all its principal minors and these principal minors are not sufficient to identify the kernel itself. We also explain why the principal minors of order non larger than 3 are sufficient to recover all other principal minors. In Section 4, the discussion of identification is extended to constrained DPP models. Section 5 concerns statistical inference. We introduce a closed form estimator of the kernel that converges almost surely to a pseudo-true kernel in the identified set. This almost sure convergence is a key point to derive the asymptotic normality of the estimated identified set. Section 6 compares our approach with the alternative estimation approach of \cite{urschel2017learning}, and derives its large deviation properties. Section 7 concludes.  One technical proof is relegated to the Appendix.

\section{The DPP model}
Let us denote by $S$ the random subset of the state space $\{1,...,d\}$, which can be equivalently represented by a $d$-dimensional binary random vector $(X_1,...,X_d)$ defined through:
 $$S = \{1 \leq j \leq d, \text{ such that } X_{j}=1 \}.$$
We say that $S$ or $(X_1,...,X_d)$ follows a DPP, if its distribution is given by: 
\begin{equation}
\label{pmf}
p(s) = \mathbb{P}[S=s] =\mathbb{P}[X_j=1, \forall j \in s]= \frac{ \det \Sigma_s }{\det (I_d+\Sigma)}, \qquad \forall s \subseteq \{1,\ldots,d\},
\end{equation}
where $\Sigma$ is SPD,  and is called the L-ensemble kernel matrix or the exclusion kernel matrix of the DPP,  $\Sigma_s=[\Sigma_{ij}]_{i,j \in s}$ is the principal submatrix with index $s$, obtained by retaining only rows and columns whose indices belong to $s$, and $\det \Sigma_s,  s\subseteq \{1,...,d\}$, the set of principal minors.  By convention,  $\det \Sigma_{\emptyset}=1.$

Alternatively, the DPP can also be characterized by the cumulative distribution function of the random set variable $S$:
\begin{equation}
\label{cmf}
\mathbb{P} [s \subseteq S] = \det K_s, \qquad \forall s \subseteq \{1,\ldots,d\},
\end{equation}
 where {$K_s$ is the corresponding submatrix of the so-called \textit{correlation kernel} $K$ (henceforth kernel $K$). It has been shown in \cite{rising2013advances} that the correlation kernel $K$ is also SPD and is in a one-to-one relationship with $\Sigma$ through: 
\begin{equation}
\label{121}
\Sigma = K (I_d-K)^{-1}\qquad \Longleftrightarrow \qquad K=I_d-(I_d+ \Sigma)^{-1}=\Sigma (I_d+ \Sigma)^{-1}.
\end{equation}
In particular, all eigenvalues of $K$ lie strictly between 0 and 1.  Moreover,  the elementary probability \eqref{pmf} can be alternatively expressed through $K$ \cite[eq.(147)]{kulesza2012learning}:
$$
p(s)=\det(I_s K + I_{\bar{s}} (I_d -K)),
$$
 where $I_s$ is the diagonal matrix with ones in the diagonal positions corresponding to elements of $s$ and 0,  otherwise.  Thus one can either estimate the L-ensemble $\Sigma$  \citep{urschel2017learning},  or the correlation kernel matrix $K$ parametrization of the DPP \citep{affandi2014learning, urschel2017learning}.  %and both approaches have been proposed in the literature [see  \cite{urschel2017learning} for the estimation of $\Sigma$ and \cite{affandi2014learning, urschel2017learning} for the estimation of $K$]. 
  In this paper, we take the second approach, which has the extra advantage that most of the simulation algorithms for DPP rely on $K$, rather than $\Sigma$ [see e.g. \cite{launay2020exact} and references therein]. %, since although $\Sigma$ allows to compute the elementary probabilities in eq.\eqref{pmf}, the number of elementary probabilities $2^d$ increases exponentially in $d$. Thus \eqref{pmf} is usually not directly useful for simulation. 

\section{Identification in the unconstrained DPP model}
Let us assume that the DPP model is well-specified with true correlation kernel $K_0$. This model has the following identifiability property:
	\begin{prop}[\cite{rising2015efficient}]
		\begin{itemize}
			\item An SPD matrix $K$ and the true kernel $K_0$ are observationally equivalent, i.e. 
            {%\color{blue}
            have the same principal minors}
            if and only if :
						\begin{equation}
\label{Dsimilar}
K= D K_0 D,
\end{equation}
						for some diagonal matrix $D$ whose diagonal elements all belong to $\{-1, 1\}$.  In other words,  we have :
						\begin{equation} 
							\label{expression}
							K_{i,j}=K_{0,i,j}d_{ii}d_{jj},
						\end{equation}
					 for any $i, j=1,...,d$.  {%\color{blue}
                        In this case, we say that $K$ and $K_0$ are $D$-similar.} %In particular, they have the same principal minors. 
			\item 		If at least one row of the true parameter $K_0$ contains nonzero elements only, then  there are $2^{d-1}$   matrices $K$ that are D-similar with $K_0$.   
		\end{itemize}
        \end{prop}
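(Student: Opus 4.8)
My plan is to prove the two parts in sequence, treating the first (the characterization of observational equivalence) as the substantive part and the second (the count $2^{d-1}$) as a combinatorial corollary. For the first part, one direction is easy: if $K = DK_0D$ with $D$ diagonal and $d_{ii} \in \{-1,1\}$, then for any $s \subseteq \{1,\ldots,d\}$ we have $K_s = D_s (K_0)_s D_s$, so $\det K_s = (\det D_s)^2 \det (K_0)_s = \det (K_0)_s$ since $(\det D_s)^2 = 1$; hence $K$ and $K_0$ have the same principal minors, i.e. the same distribution by \eqref{cmf}. The harder direction is the converse: assuming all principal minors agree, I must produce such a $D$. I would first look at the $1 \times 1$ minors, which force $K_{ii} = K_{0,ii}$ for all $i$, and then the $2 \times 2$ minors, which give $K_{ij}^2 = K_{0,ij}^2$, so $K_{ij} = \varepsilon_{ij} K_{0,ij}$ with $\varepsilon_{ij} \in \{-1,1\}$ (and $\varepsilon_{ij}$ unconstrained when $K_{0,ij}=0$). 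The task is then to show that, on the support of the nonzero off-diagonal entries, the sign pattern $\varepsilon_{ij}$ factors as $\varepsilon_{ij} = d_{ii}d_{jj}$; this is where the $3 \times 3$ minors enter, since $\det K_{\{i,j,k\}} = \det (K_0)_{\{i,j,k\}}$ together with the already-established equality of all lower-order minors forces the ``triangle'' condition $\varepsilon_{ij}\varepsilon_{jk}\varepsilon_{ik} = 1$ whenever $K_{0,ij}, K_{0,jk}, K_{0,ik}$ are all nonzero.

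The main obstacle is the bookkeeping needed to go from the local triangle conditions to a global factorization, especially in the presence of zero entries, where the sign relation simply does not constrain the corresponding $\varepsilon_{ij}$. I would organize this via a graph $G$ on vertex set $\{1,\ldots,d\}$ with an edge $\{i,j\}$ whenever $K_{0,ij} \neq 0$. On each connected component of $G$, pick a spanning tree, set $d_{ii}$ arbitrarily (say $+1$) at a root, and propagate $d_{jj} := \varepsilon_{ij} d_{ii}$ along tree edges; the triangle conditions (extended to cycles by induction, using that every cycle decomposes into triangles within a chordal-closure argument, or more directly by noting any cycle relation follows from repeatedly applying $3\times 3$ minor identities along a filling of the cycle) guarantee consistency, so that $\varepsilon_{ij} = d_{ii}d_{jj}$ holds on every edge. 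For pairs $\{i,j\}$ with $K_{0,ij}=0$ we automatically have $K_{ij} = 0 = d_{ii}d_{jj}K_{0,ij}$ regardless of the signs. One must still check that the resulting $K = DK_0D$ actually reproduces \emph{all} principal minors and not merely those of order $\le 3$; for this I would invoke the fact, stated in the paper's Section 3, that the principal minors of order at most $3$ determine all higher-order principal minors of an SPD matrix, so matching up to order $3$ suffices.

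For the second part, assume some row $i_0$ of $K_0$ has all entries nonzero. Then in the graph $G$ the vertex $i_0$ is adjacent to every other vertex, so $G$ is connected with a single component, and the spanning-tree construction above shows that $D$ is determined by its value $d_{i_0 i_0} \in \{-1,1\}$ at the root together with the forced values $d_{jj} = \varepsilon_{i_0 j} d_{i_0 i_0}$ at all other vertices. Hence the set of $D$-similar matrices is parameterized by the $2^d$ choices of diagonal sign matrices $D$, but $D$ and $-D$ give the same $K = DK_0D$; conversely two distinct sign matrices $D, D'$ with $D' \neq \pm D$ yield $DK_0D \neq D'K_0D'$, because their ratio is controlled by $d_{i_0j}d'_{i_0j}$ and some off-diagonal entry genuinely differs (here the all-nonzero row is what prevents a spurious coincidence). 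Therefore the number of distinct $D$-similar kernels is exactly $2^d / 2 = 2^{d-1}$. The only delicate point is verifying the injectivity of $D \mapsto DK_0D$ modulo $\{\pm I\}$, which again rests on the full row of nonzero entries of $K_0$.
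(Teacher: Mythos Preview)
The paper does not prove the first bullet of Proposition~1; it is attributed to \cite{rising2015efficient}. For the second bullet the paper only records that $DK_0D=(-D)K_0(-D)$, giving the upper bound $2^{d-1}$. So there is essentially nothing in the paper to compare your argument against, and your treatment of the second bullet is in fact more complete than the paper's: you also supply the injectivity of $D\mapsto DK_0D$ modulo $\pm I$ (using the all-nonzero row) to obtain the exact count rather than just the upper bound.

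That said, your sketch of the hard direction in the first bullet has a real gap at the cycle-consistency step. From the $3\times 3$ minors you correctly get $\varepsilon_{ij}\varepsilon_{jk}\varepsilon_{ik}=1$ whenever all three of $K_{0,ij},K_{0,jk},K_{0,ik}$ are nonzero. But the proposed extension to longer cycles by a ``filling'' or ``chordal-closure'' argument fails in general: if a chord of your triangulation is \emph{not} an edge of the support graph $G$ (because the corresponding entry of $K_0$ vanishes), the $3\times 3$ identity on any triangle using that chord is vacuous and yields no constraint on the $\varepsilon$'s. For an induced $\ell$-cycle in $G$ one must instead invoke the $\ell\times\ell$ principal minor on the cycle vertices: in its Leibniz expansion every nonzero term except (twice) the full cycle product $\prod_r K_{0,i_r i_{r+1}}$ is already fixed by lower-order principal minors, so equality of all principal minors forces the cycle product, and hence $\prod_r \varepsilon_{i_r i_{r+1}}$, to equal $1$. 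This is precisely the step where order-$3$ information alone is insufficient and the full hypothesis is needed. Your closing appeal to Proposition~2 is also misplaced: that statement is made under Assumption~1 (all entries of $K_0$ nonzero), which is not a hypothesis of Proposition~1, and in any case once you have established $K_{ij}=d_{ii}d_{jj}K_{0,ij}$ for every pair $i,j$ you already have $K=DK_0D$ as matrices, so there is nothing further to verify about higher-order minors.
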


        Proposition 1 can be interpreted as follows.  Let us assume i.i.d. observations $S_t,  t=1,...,T$, of the random set, where sample size $T$ is large.  Since we can derive consistent estimators of the distribution of $S$, this DPP distribution is identifiable,  as well as the set of principal minors $(\det K_s, s \subseteq \{1,...,d\})$ of the correlation kernel $K$, or the principal minors $(\det \Sigma_s, s \subseteq \{1,...,d\})$ of the L-ensemble $\Sigma$.  Proposition 1 shows that matrices $\Sigma$ and $K$ themselves are not identifiable and characterizes the identified set $IS_0$:
        $$
        IS_0=\{K, \text{ that are D-similar to } K_0  \},
        $$
        which is not reduced to the singleton $\{K_0\}$ in general.

            Since $D K_0 D=(-D)K_0 (-D)$, the number of observationally equivalent matrices is at most $2^{d-1}$, which explains the second part of Proposition 1.  We also make the following assumption:
        \begin{Ass}
       % 	At least one row of the true kernel $K_0$ contains nonzero elements only.  Without loss of generality, we assume that the first row of $K_0$ contains nonzero elements only. 
       All elements of $K_0$ are nonzero. 
        \end{Ass}

        We can also check that if, say, the first row of $K_0$ has only nonzero elements, then, by eq.\eqref{expression}, for any diagonal matrix $D$ with $\pm 1$ elements on the diagonal, the first row of matrix $DK_0 D$ also satisfies Assumption 1.
   %     Later on, we will discuss how the estimation can be adapted to the more general irreducible case.  % since 1 is connected to 2

 One important corollary of Proposition 1 is:
 \begin{coro}
Under Assumption 1, there is no loss of generality to assume that the first row of $K$ is positive,  that is,  $K_{1i} >0$ for each $i=2,....,d$.  
 \end{coro}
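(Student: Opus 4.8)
The plan is to use the second bullet of Proposition 1 together with the freedom in the choice of the diagonal matrix $D$. Under Assumption 1 every element of $K_0$ is nonzero; in particular every entry of the first row, $K_{0,1i}$ for $i=2,\dots,d$, is a nonzero real number, whose sign is either $+$ or $-$. I would exhibit an explicit $D$ that flips exactly the ``wrong'' signs.

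First I would define the diagonal matrix $D=\mathrm{diag}(d_{11},\dots,d_{dd})$ by setting $d_{11}=1$ and, for each $i=2,\dots,d$, $d_{ii}=\mathrm{sign}(K_{0,1i})\in\{-1,1\}$. By Assumption 1 these signs are well defined. Then $D$ has $\pm 1$ entries on its diagonal, so by Proposition 1 the matrix $K:=DK_0D$ is SPD and observationally equivalent to $K_0$, hence belongs to the identified set $IS_0$. Next I would compute the first row of $K$ using the entrywise formula \eqref{expression}: for $i=2,\dots,d$,
\[
K_{1i}=K_{0,1i}\,d_{11}d_{ii}=K_{0,1i}\cdot 1\cdot \mathrm{sign}(K_{0,1i})=|K_{0,1i}|>0,
\]
the strict inequality again being a consequence of Assumption 1. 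Thus $K\in IS_0$ has a strictly positive first row. Since $K_0$ and $K$ are observationally equivalent, any inference procedure sees them as the same model, so one may as well work with the normalized representative $K$; this is the precise sense of ``no loss of generality.''

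The only mild subtlety — and the single step worth spelling out — is the justification that replacing $K_0$ by $K=DK_0D$ is innocuous for the estimation problem: this is exactly the content of Proposition 1, which says $D$-similar matrices have identical principal minors and hence induce the same DPP distribution \eqref{cmf}, so the identified set is the same set of $2^{d-1}$ matrices whether we start from $K_0$ or from $K$. There is no real obstacle here; the corollary is essentially a normalization remark, and the proof is just the observation that among the $2^{d-1}$ members of $IS_0$ at least one (in fact exactly two, namely $\pm D$) has a positive first row, obtained by the explicit sign choice above.
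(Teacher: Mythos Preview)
Your proof is correct and follows the same approach as the paper's, only more explicitly: you give the concrete choice $d_{11}=1$, $d_{ii}=\mathrm{sign}(K_{0,1i})$ where the paper merely says ``by choosing an appropriate $D_0^*$.'' One small slip in your closing parenthetical: $\pm D$ yield the \emph{same} element of $IS_0$ since $(-D)K_0(-D)=DK_0D$, so under Assumption 1 exactly one (not two) member of $IS_0$ has a strictly positive first row.
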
 
 \begin{proof}
For a given diagonal matrix $D$ with $\pm 1$ on the diagonal, let us denote by $s_D$ the subset of indices $i$ such that $d_{ii}<0$.  Then the operation $K_0 \rightarrow D K_0 D$ has the effect of  changing first the sign of the rows indexed by $s_D$ of $K$, and then changing the sign of the columns indexed by $s_D$.  As a consequence,  by choosing an appropriate $D_0^*$,  we can get a matrix $K_0^*=D_0^* K_0D_0^*$, such that its first row is positive.  
  
%Indeed,  if we denote by $s$ the subset of indices $i$ such that $K_{1i}<0$,  then it suffices to choose $D$ to be the diagonal matrix such that  $D_{ii}=-1$ if and only if $i $ belongs to $s$.  
 \end{proof}

Thus the transformation $K_0^*=D_0^* K_0D_0^*$, where $D_0^*$ is constructed from $K_0$, allows to replace the true kernel $K_0 \in IS_0$ by another element $K_0^* \in IS_0$, called a pseudo-true kernel.  From now on, we will assume that: 
%On the contrary to the true value $K_0$, the pseudo-true value $K_0^*$ with a positive first row does \textit{not} have any identification issue.  In the rest of this paper, we will focus on the estimation of this pseudo true-value.   
 %Throughout this paper we will introduce the identification restriction obtained in Corollary 1. 
\begin{Ass}
The first row of the pseudo-true kernel $K^*_0$ is positive.  
\end{Ass} 

This condition is an identification restriction.  That is, initially,  $K_0$ is only identified up to D-similarity,  but with Assumption 2,  it becomes possible to identify in a unique way $K^*_0$.  It will be shown in the estimation section that this is a huge advantage compared to existing estimation methods of the DPP, most of which have to deal with the fact that the objective function they maximize, such as the likelihood function, have at least $2^d$ global maxima.  

%From \eqref{cmf},  one gets immediately that:
% \begin{align}
% \mathbb{P}[X_i=1]& = K_{ii}, \qquad \forall i=1,...,d, \label{marginal}\\
% \mathbb{P}[X_i=X_j=1] &= K_{ii} K_{jj} -K^2_{ij},  \qquad \forall i\neq j =1,...,d,\label{pairwise}
% \end{align}

%Thus Assumption 2 means that none of the two binary variables $X_i$ and $X_j$ are independent, since for these binary variables, independence is equivalence to zero correlation and by eq.\eqref{pairwise}, this is equivalent to $K_{ij}=0$. 

%Conversely,  if at least one off-diagonal element of $K_0$ is zero,  let

%In particular,  the true correlation kernel $K_0$ is identified only up to the principal minors.  This result,  however,  is not easy to use for estimation purposes for several reasons, %since it does not suggest how to restrict the domain of $K$ to a region in which there is no observationally equivalent values, i.e. the likelihood function has a unique maximum.  Proposition 1 is also widely used for principal minor assignment (PMA),  which is an algorithm allowing to construct the SPD matrix given its eigenvalues.  This PMA algorithm, however, is also not convenient for estimation, 
 % since in practice, the principal minors are not directly observable. Thus in order to apply the algorithm of \cite{rising2015efficient}, the sample counterparts of the $2^d$ principal minors have to be computed. This is cumbersome if ${d}$ is very large, and the estimates could likely lack accuracy if $2^d>n$.   Moreover,  
 We have mentioned that the set of principal minors $(\det K_s)$ or  $(\det \Sigma_s)$ are two equivalent parametrization of the DPP model.  However, they are not of minimal dimensions.  The following result allows us to focus on the principal minors of lower orders only as a system of generators.    

        \begin{prop}[\cite{stouffer1924independence}, Theorem 1]Under Assumption 1,  the principal minors of a square matrix $K$ are uniquely determined by the following $d^2-d+1$ principal minors:
        \begin{itemize}
\item the principal minors of order 1 and 2, that are $K_{ii}, i=1,...,d$, and $K_{ii}K_{jj}-K^2_{ij},  i \neq j=1,...,d$,
\item and the principal minors of order 3 of the type:
\begin{equation}
\det K_{\{1,i,j\}}= K_{11} K_{ii} K_{jj} +2 K_{1i} K_{ij} K_{1j}- K_{11} K^2_{ij}- K_{ii} K^2_{1j}- K_{jj} K^2_{1i},
\label{order3}
\end{equation} with $1 < i \neq j \leq d$.  
\end{itemize}
        \end{prop}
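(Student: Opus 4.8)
The plan is to show that the listed $d^{2}-d+1$ minors determine, through explicit algebra, a canonical representative of the $D$-similarity class of $K$, and then to invoke Proposition 1: since every principal minor is constant on that class, it is a function of the listed ones. It is worth recording the bookkeeping first, since it organizes the argument: there are $d$ minors of order $1$, $\binom d2$ of order $2$, and $\binom{d-1}2$ of order $3$ anchored at the index $1$, and indeed $d+\binom d2+\binom{d-1}2=d+(d-1)^{2}=d^{2}-d+1$.

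First I would read the diagonal off the order-$1$ minors, and then, using symmetry of $K$ and $\det K_{\{i,j\}}=K_{ii}K_{jj}-K_{ij}^{2}$, recover $K_{ij}^{2}=K_{ii}K_{jj}-\det K_{\{i,j\}}$ for every pair; under Assumption 1 this quantity is positive, so $|K_{ij}|$ is determined and nonzero. Next, rearranging \eqref{order3} gives
$$2K_{1i}K_{ij}K_{1j}=\det K_{\{1,i,j\}}-K_{11}K_{ii}K_{jj}+K_{11}K_{ij}^{2}+K_{ii}K_{1j}^{2}+K_{jj}K_{1i}^{2},$$
whose right-hand side is a polynomial in the listed minors; hence the triple product $K_{1i}K_{ij}K_{1j}$ is determined for all $1<i\neq j\le d$. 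The point to stress is that this product is invariant under $K\mapsto DKD$ (it is multiplied by $d_{11}^{2}d_{ii}^{2}d_{jj}^{2}=1$), so its value coincides with the one computed from the particular $D$-similar copy $K^{**}=D^{*}KD^{*}$ with positive first row furnished by Corollary 1, namely with $d^{*}_{11}=1$ and $d^{*}_{ii}=\operatorname{sgn}(K_{1i})$.

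To conclude, I would build a symmetric matrix $\widetilde K$ directly from the listed minors by $\widetilde K_{ii}=K_{ii}$, $\widetilde K_{1i}=\sqrt{K_{11}K_{ii}-\det K_{\{1,i\}}}$ for $i\ge 2$, and $\widetilde K_{ij}=\big(K_{1i}K_{ij}K_{1j}\big)\big/\big(|K_{1i}|\,|K_{1j}|\big)$ for $2\le i\neq j\le d$, with the numerator as computed above. A term-by-term comparison — using $\widetilde K_{1i}=K^{**}_{1i}$ and the $D$-invariance just noted, which gives $K_{1i}K_{ij}K_{1j}=K^{**}_{1i}K^{**}_{ij}K^{**}_{1j}$ — shows $\widetilde K=K^{**}$. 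Thus the listed minors pin down $K^{**}$ exactly, hence every $\det K^{**}_{s}$ (each a polynomial in the entries of $K^{**}$), and by Proposition 1 $\det K_{s}=\det K^{**}_{s}$ for all $s$.

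The main obstacle is conceptual rather than computational: one cannot hope to recover $K$ itself (the $D$-similarity is a genuine ambiguity), so the argument must extract from the minors only $D$-invariant data and then show this data is rich enough to determine a whole class. The crux is recognizing that the order-$3$ minors anchored at a single index, together with the order-$\le 2$ information, encode precisely the invariant triple products $K_{1i}K_{ij}K_{1j}$, and that Assumption 1 enters exactly to guarantee that the $|K_{1i}|$ are nonzero, so the final recovery of the remaining entries by division is legitimate. (Alternatively, more in the spirit of Stouffer's original proof, one could expand an arbitrary principal minor by cofactors and argue inductively that it is a polynomial in the listed minors, but the reconstruction route above seems cleaner.)
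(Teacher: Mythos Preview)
Your proposal is correct and follows essentially the same route as the paper: the paper does not prove Proposition 2 directly but (i) illustrates the cofactor-expansion idea on $\det K_{\{1,2,3,4\}}$ and (ii) establishes the result via Proposition 3, which reconstructs the canonical representative $K_0^*$ with positive first row from the order-$\le 2$ minors and the anchored order-$3$ minors, exactly as you do with your $\widetilde K=K^{**}$. Your closing remark about the inductive cofactor route also matches the paper's illustration, so both strands of your write-up align with the paper's treatment.
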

    
     As an illustration,  in the special case where $K$ is symmetric, by using the cofactor expansion along the last row,  one gets the principal minor of order 4:
    \begin{align*}
    	\det K_{\{1,2,3,4\}}  &= K_{44}\det  K_{\{1,2,3\}}- K_{34}^2\det K_{\{1,2\}}-K_{24}^2\det K_{\{1,3\}} + K_{14}^2 \det K_{\{2,3\}}   \\
    	& \qquad -2 K_{33} K_{12}K_{24}K_{14}- 2K_{22} K_{13}K_{14}K_{34}+2 K_{11} K_{23}K_{34}K_{23} \\
   & 2 K_{14} K_{13} K_{23} K_{24}+2 K_{12} K_{23} K_{34} K_{41}-2 K_{12} K_{13} K_{24} K_{34}.
    \end{align*}
In the  second line, we can apply eq.\eqref{order3} and express $K_{12}K_{24}K_{14}$,  $K_{13}K_{14}K_{34}$ and $K_{23}K_{34}K_{23}$ using $\det K_{\{1,2,4\}}$, $\det K_{\{1,3,4\}}$,  $\det K_{\{2,3,4\}}$, as well as the principal minors of order 1 and 2.  On the other hand,  the three terms in the third line can be expressed as:
    \begin{align*}
 K_{14} K_{13} K_{23} K_{24}&= \frac{ (K_{23} K_{13} K_{12} )(K_{14} K_{24}K_{12})}{K^2_{12}}, \\
 K_{12} K_{23} K_{34} K_{41}&= \frac{ ( K_{12} K_{23}  K_{13} )(K_{34} K_{41} K_{13} )}{K^2_{13}}, \\
 K_{12} K_{13} K_{24} K_{34}&= \frac{ ( K_{12} K_{24}  K_{14} )(K_{13} K_{34} K_{14} )}{K^2_{14}}, 
    \end{align*}
    where both the denominators and the numerators are functions of the principal minors of order at most 3. 

Proposition 2 is the key identification restriction of our paper.  As a comparison,  \cite{urschel2017learning}'s identification relies heavily on graph theory\footnote{See also \cite{brunel2018learning, brunel2024recovering} for a similar identification approach for non-symmetric DPP models. }. As \cite{urschel2017learning} put it (last paragraph, section 2.1),  ``\textit{for any $K$, there exists (a minimal order) $\ell_0$ depending only on the graph induced by $K$, such that $K$ can be recovered up to a D-similarity with only the knowledge of its principal minors of size at most $\ell.$ }" Most of the results concerning their identification and estimation concern this unknown  order $\ell_0$.  Proposition 2 simply says that under Assumption 1, this order is known: in fact $\ell_0=3$ and hence does not need to be estimated.

Proposition 2 is applicable to \textit{any} symmetric matrix, without $K^*_0$ being the correlation kernel of a DPP. In the following Proposition 3, we prove this result in the case where $K^*_0$ is the correlation kernel of a DPP, under both Assumptions 1 and 2. Indeed, in this case, the proof becomes constructive, in the sense that it provides a closed form expression of $K_0^*$ in terms of the true DPP distribution.  More precisely, we will explain how to derive all the elements of $K^*_0$ from  the knowledge of the marginal, pairwise and threewise probabilities of the binary variables $X_i, i=1,...,d$. For expository purpose, we omit below the ``$0$" index for the true distribution and kernel, and the ``$^*$" for the (pseudo) kernel, that is we use $K_{ij}$ instead of $K^*_{0,ij}$.
\begin{prop}
	Under Assumptions 1 and 2, the pseudo true value $K^*_0$ is identifiable through: 
	\begin{align}
		\label{identifymargin}
		K_{ii}&=\mathbb{P}[X_i=1]=\mathbb{E}[X_i], \qquad \forall i=1,...,d,\\
		\label{identifypairwise}
		\mid K_{ij} \mid&= \sqrt{ K_{ii} K_{jj}- \mathbb{P}[X_i=X_j=1]}=\sqrt{-Cov(X_i, X_j)}, \qquad  i <j,\\
		\label{identifythreewise}
		sgn(K_{ij})&= sgn \Big( \mathbb{P}[X_1=1, X_i=1, X_j=1]-K_{11}K_{ii}K_{jj}  \nonumber \\
		&\qquad+ K_{11} K^2_{ij}+ K_{ii} K^2_{1j}+K_{jj} K^2_{1j} \Big), \qquad  1<i <j \leq d,
	\end{align}
where $-Cov(X_i,X_j)$ is positive under Assumption 2, and the $sgn(\cdot)$ function is defined as $$sgn(x)=
\begin{cases}
1 & \text{ if } x>0 \\
-1 & \text{ if } x<0 \\
0& \text{ if } x=0
\end{cases}.$$  
%1$ if $x>0$, $-1$ if $x<0$, and $0$, if $x=0$. 
Note that under Assumption 2, $sgn(K_{ij})$ can only be $\pm 1$, but not zero.    
\end{prop}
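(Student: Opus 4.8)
The plan is to extract each of the three displayed groups of identities directly from the cumulative-distribution characterization \eqref{cmf}, namely $\mathbb{P}[s \subseteq S] = \det K_s$, evaluated on subsets $s$ of cardinality one, two and three, and then to invoke Assumption 2 to remove the residual sign ambiguity. For the diagonal, I would set $s = \{i\}$ in \eqref{cmf}, which gives $\mathbb{P}[X_i = 1] = \det K_{\{i\}} = K_{ii}$, establishing \eqref{identifymargin}.

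Next, taking $s = \{i,j\}$ in \eqref{cmf} yields $\mathbb{P}[X_i = X_j = 1] = \det K_{\{i,j\}} = K_{ii}K_{jj} - K_{ij}^2$, hence $K_{ij}^2 = K_{ii}K_{jj} - \mathbb{P}[X_i = X_j = 1]$. Substituting $K_{ii} = \mathbb{E}[X_i]$, $K_{jj} = \mathbb{E}[X_j]$ and $\mathbb{P}[X_i = X_j = 1] = \mathbb{E}[X_i X_j]$ identifies this right-hand side with $-\mathrm{Cov}(X_i,X_j)$; it is nonnegative since it equals $K_{ij}^2$, and in fact strictly positive because $K_{ij} \neq 0$ by Assumption 1, so its positive square root equals $|K_{ij}|$, which is \eqref{identifypairwise}.

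The only genuinely new step is the sign. I would take $s = \{1,i,j\}$ with $1 < i < j$, so that $\mathbb{P}[X_1 = X_i = X_j = 1] = \det K_{\{1,i,j\}}$, and use the explicit order-3 formula \eqref{order3}; rearranging it isolates
$$
2 K_{1i} K_{1j} K_{ij} = \mathbb{P}[X_1 = X_i = X_j = 1] - K_{11}K_{ii}K_{jj} + K_{11}K_{ij}^2 + K_{ii}K_{1j}^2 + K_{jj}K_{1i}^2 .
$$
Under Assumption 2 the first row of $K^*_0$ is positive, so $K_{1i} > 0$ and $K_{1j} > 0$; therefore the left-hand side has the same sign as $K_{ij}$, and applying $\mathrm{sgn}(\cdot)$ to both sides gives \eqref{identifythreewise}. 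Moreover this sign is $\pm 1$ and never $0$, because $K_{1i}$, $K_{1j}$ and $K_{ij}$ are all nonzero by Assumption 1.

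To conclude, I would observe that these three families of formulas pin down $K^*_0$ entrywise, not merely its principal minors as in Proposition 2: the diagonal is \eqref{identifymargin}, the first row is positive with entries $\sqrt{K_{11}K_{jj} - \mathbb{P}[X_1 = X_j = 1]}$, and every other entry $K_{ij}$ with $1 < i < j$ equals its magnitude from \eqref{identifypairwise} times its sign from \eqref{identifythreewise}. Since all the quantities on the right are marginal, pairwise and threewise event probabilities of $(X_1,\dots,X_d)$, the matrix $K^*_0$ is a closed-form function of the true DPP distribution. I do not anticipate a real obstacle; the delicate points are purely bookkeeping — correctly transcribing the signs when rearranging \eqref{order3}, and checking that neither the square root nor the sign ever degenerates — and both are handled by Assumption 1, with Assumption 2 supplying the orientation that makes $K^*_0$ (rather than its entire $D$-similarity class) the unique solution.
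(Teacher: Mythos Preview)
Your proof is correct and follows essentially the same route as the paper's own proof: both read off \eqref{identifymargin} and \eqref{identifypairwise} from \eqref{cmf} at orders one and two, then rearrange the order-three expansion \eqref{order3} of $\det K_{\{1,i,j\}}$ to isolate (a positive multiple of) $K_{ij}$ and invoke Assumption 2 on $K_{1i},K_{1j}$ to extract the sign. Your version is in fact slightly more careful, keeping the factor $2$ and spelling out the covariance identity and the nondegeneracy via Assumption 1.
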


Clearly, Proposition 3 implies Proposition 2 in the case where $K$ is a kernel satisfying Assumptions 1 and 2.  %It also says 

\begin{proof}
 
Eqs.\eqref{identifymargin} and \eqref{identifypairwise} are immediate consequences of eq.\eqref{cmf} by focusing on principal minors of order 1 and 2, respectively.  For \eqref{identifythreewise}, remark that:
$$
\mathbb{P}[X_1=X_i=X_j=1]= \det K_{\{1,i,j\}},
$$
whose expression is given in equation \eqref{order3}.  Thus we get:
$$
K_{ij}= \frac{1}{K_{1i}K_{1j}}\Big(\mathbb{P}[X_1=1, X_i=1, X_j=1] -K_{11}K_{ii}K_{jj}+K_{11} K^2_{ij}+ K_{ii} K^2_{1j}+K_{jj} K^2_{1j}\Big).
$$
Since $K_{1i}$, $K_{1j}$ are both positive by Assumption 2, we get eq.\eqref{identifythreewise}.
% for any $1 < i \neq j \leq d$. 

\end{proof} 
Proposition 3 is the key result of the paper.  Not only it will provide, in section 5, a closed form estimator of $K^*$ from observations of random sets,  but more generally,  it provides a much simpler algorithm for the Principal Minor Assignment problem for symmetric matrices,  compared to the existing approaches by \cite{griffin2006principal} and \cite{rising2015efficient}.

\section{Identification in the constrained DPP model}
The unconstrained DPP model depends on the $d(d+1)/2$ linearly independent parameters of the kernel matrix. Thus,  in some applications where $d$ is very large, we can encounter the curse of dimensionality. This suggests the introduction of constrained DPP model:
$$
K=K(\theta), \qquad \theta \in \Theta,
$$
where $K(\cdot)$ is a function and the dimension of the parameter $\theta$ is much smaller than $d(d+1)/2$. 

Similar constrained (discrete) DPP models have been considered by \cite{gartrell2017low, gartrell2019learning,  yu2025modeling},  and are also quite popular in the continuous DPP  literature \citep{lavancier2015determinantal, biscio2017contrast,lavancier2021adaptive, poinas2023asymptotic}. 

If the above constrained model is well specified, then we have a true value $\theta_0$ of parameter $\theta$, and the corresponding true value $K_0=K(\theta_0)$ of the kernel. We can also introduce the constrained identified set for the kernel and the identified set for the parameter. They are defined by:
\begin{align*}
	CIS_0&= \{ K=K(\theta): \exists D: K(\theta)= D K_0 D'\}, \\
	\Theta_0&= \{\theta: K(\theta) \in CIS_0\}.
\end{align*}
In other words, $CIS_0$ is the set of kernels of the form $K(\theta)$ that are D-similar with $K_0$ and $\Theta_0$ is the set of such $\theta$'s. Since $CIS_0=IS_0 \cap \{K: K=K(\theta), \theta \in \Theta \}$, this constrained identification set is smaller or equal to the unconstrained identified set $IS_0$.  In other words, the parametric specification can possibly facilitate the identification issue, without necessarily render $K_0$ identifiable. 

At this stage, let us discuss the status of the unconstrained pseudo kernel $K_0^*$. Two cases have to be distinguished since $K_0^*$ may or may not belong to the constrained set $\{K: K=K(\theta), \theta \in \Theta\}$. In the first case, we can write $K_0^*=K(\theta_0^*)$ and define a pseudo true value $\theta_0^*$ of $\theta$. In the other case, this value $K_0^*$ does not belong to the constrained identified set, thus it allows for recovering the distribution of a DPP model, but this distribution does not satisfy the constraints of the parametric model. 

More generally, Proposition 3 can still be used to analyze the identification issue by looking at the $CIS_0$. Examples of constrained DPP model as sparse models, or factor models, are discussed below. 
\begin{exm}
Let us consider the case where $K$ is an equi-covariance matrix:
\begin{equation}
	K(\theta)=\sigma^2(1- \rho) I_d+\sigma^2\rho ee',
	\label{equi}
\end{equation}
where $e$ is the $d-$dimensional vector of $1$'s, $\sigma \geq 0$, the vector of parameters $\theta=(\sigma, \rho)'$ and $d \geq 3.$ The eigenvalues of $K(\theta)$ are $\sigma^2(1-\rho)$ and $\sigma^2[1+(d-1)\rho]$. Thus the constraint on the parameters is that both eigenvalues are strictly between 0 and 1, that is $-\frac{1}{d-1}<\rho<1$. 

On the contrary to the unconstrained case,  both parameters $\rho$ and $\sigma^2$ are identified,  in the sense that no D-similar kernel matrix $K_0=K(\theta_0)$ can be written in this equicovariance form, except $K_0$ itself.
\end{exm}

\begin{exm}
Let us assume that $K(\theta)$ is a Toeplitz matrix, that is, $K(\theta)_{ij}=a_{|i-j|}$ for a sequence $a_0,a_1,....,a_{d-1}$. This can be interpreted as a spatial model, in which the covariance between $X_i$ and $X_j$ depends on the indices $i$ and $j$ through their distance $|i-j|$ only. For this model, the constrained identified set $CIS_0$ is not reduced to singleton, but rather contains $2^{d-1}$ elements, if $a_1,...,a_{d-1}$ are all non-zero. 
\end{exm}

%Proposition 2 provides a natural estimation strategy.  In order to estimate $K$ from a DPP model, it suffices to focus on the aforementioned principal minors of orders 1, 2, and 3.   
\section{Statistical inference}
 Let us now assume that we observe i.i.d. samples $S_t, t=1,...,T$.  Each $S_t$ can also be equivalently written as a $d$-dimensional binary random vector $(X_{t,1},...,X_{t,d})$.  
 
 The statistical inference for DPP models is challenging due to both the identification issue discussed in Section 2, and the numerical difficulties encountered in computing either the maximum likelihood estimate (MLE), or a method of moment estimate of $K_0$. In this respect, it is important to distinguish the properties of the MLE from the properties of an online stochastic gradient ascent (SGA) estimator introduced as an approximation of the MLE. %(see Appendix 1) 
 
 The aim of this section is to explain how the closed form expression of the pseudo kernel $K_0^*$ in terms of the lower-dimensional joint probabilities can be used to facilitate the inference, either for a direct use, or as the starting point of stochastic learning algorithms.  
 
  \subsection{Estimation of a non-constrained DPP model}
 In an unconstrained DPP model, the closed form formulas of Proposition 3, providing the elements of $K_0^*$ in terms of the true distribution, can be used to derive consistent and asymptotically normal estimators $\hat{K}^*_T$ of $K_0^*$. Let us denote by $\hat{K}^*_{ij}, i,j=1,...,d$,  these elements\footnote{For expository purpose, we omit the index $T$ when writing the elements of $\widehat{K^*_T}$. }. They are obtained by simply replacing  the theoretical probabilities and covariances on the right hand side of equations \eqref{identifymargin},\eqref{identifypairwise} and \eqref{identifythreewise} with their sample counterparts: 
\begin{align}
 \widehat{K_{ii}^*}&:= \frac{1}{T} \sum_{t=1}^T X_{t,i}, \label{samplemargin} \\
  \widehat{| K_{ij}^* | }&:= \sqrt{\widehat{-Cov}(X_i, X_j)}. \label{samplepairwise}\\
 \label{samplethreewise}  \widehat{K_{ij}^*}&:= 
\begin{cases}
   \widehat{| K_{ij}^* | },& \text{ if }  \widehat{sgn(K^*_{ij})}=1, \\
 - \widehat{| K_{ij}^* | }, & \text{ if }  \widehat{sgn(K^*_{ij})}=-1,
 \end{cases}  \\
\text{ where } \widehat{sgn(K^*_{ij})}&:=  sgn\Big(\widehat{ \mathbb{P}}[X_1=1, X_i=1, X_j=1]-\widehat{K_{11}^* } \widehat{K^* _{ii}} \widehat{K^* _{jj}}+  \widehat{K_{11}^* }\widehat{|K_{ij}^* |}^2\nonumber \\ 
&\qquad+\widehat{ K^* _{ii}} \widehat{|K^*_{1j}| }^2+ \widehat{K_{jj}} \widehat{|K_{1i}^*| } ^2\Big). \label{orderthree}
 \end{align}
 Due to these closed form expressions, the estimator above has standard asymptotic behaviors, since the challenging identification issue has been solved by focusing on the pseudo true kernel $K^*_0$ instead of the true kernel $K_0$, which is anyway observationally indistinguishable from $K_0$. 
 %\subsection{Asymptotic behavior}

First, by the (strong) law of large numbers and central limit theorem, we have:
 \begin{prop}
 \label{consistency}
 	%If all entries of the true parameter $K_0$ are nonzero,  and if $K_0^*$ is D-similar with $K_0$ such that the first rows of $K_0^*$ are all positive. 
 	 Under Assumption 2, the estimator $\widehat{K_T^*}$ defined by equations \eqref{samplemargin},  \eqref{samplepairwise} and \eqref{samplethreewise} converges almost surely to $K_0^*$ as $T$ increases to infinity:
 	 	$$
   \widehat{K^*_{T}} \longrightarrow  K^*_{0}. 
 	$$
 	 In particular, almost surely, for large $T$,  
 	 \begin{itemize}
 	 \item the estimate $\widehat{| K^*_{ij} |}$ in \eqref{samplepairwise} is well defined, i.e., $\widehat{-Cov}( X_i X_j)$ is nonnegative. 
 	 \item $sgn(\widehat{K^*_{ij}})=sgn(K^*_{0,ij})$, that is, we can recover the correct signs of all the elements of $K_0^*$.
 	 \end{itemize}

 	 \end{prop}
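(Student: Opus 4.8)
The plan is to reduce the whole statement to the strong law of large numbers (SLLN) applied to the i.i.d.\ bounded functionals $X_{t,i}$, $X_{t,i}X_{t,j}$ and $X_{t,1}X_{t,i}X_{t,j}$, combined with the continuous mapping theorem at every step except the one where the map ($sgn$) is discontinuous, which is treated separately by an ``eventually constant sign'' argument. Since the $S_t$ are i.i.d., these functionals are i.i.d.\ and bounded, hence integrable, so for every pair $(i,j)$ the corresponding empirical marginal, pairwise and threewise probabilities converge almost surely to their population values; we argue throughout on the almost sure event (an intersection of finitely many probability-one events) on which all these convergences hold simultaneously.

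The marginal part is immediate: $\widehat{K_{ii}^*}=\frac1T\sum_t X_{t,i}\to\mathbb{E}[X_i]=K^*_{0,ii}$ by the SLLN, which gives \eqref{samplemargin}. For the pairwise part \eqref{samplepairwise}, the empirical covariance $\widehat{\mathrm{Cov}}(X_i,X_j)$ converges a.s.\ to $\mathrm{Cov}(X_i,X_j)$, and by \eqref{identifypairwise} one has $-\mathrm{Cov}(X_i,X_j)=(K^*_{0,ij})^2$, which is \emph{strictly} positive because all entries of $K_0$, hence all entries of $K^*_0=D^*_0K_0D^*_0$, are nonzero (Assumption 1). Therefore $\widehat{-\mathrm{Cov}}(X_i,X_j)>0$ for all $T$ large enough, so $\widehat{|K^*_{ij}|}$ is well defined eventually, and since $\sqrt{\cdot}$ is continuous at the strictly positive point $(K^*_{0,ij})^2$, we obtain $\widehat{|K^*_{ij}|}\to|K^*_{0,ij}|$ a.s. This is the first bullet point.

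For the sign in \eqref{samplethreewise}--\eqref{orderthree}: by the SLLN together with the two preceding convergences, the argument of $sgn(\cdot)$ in \eqref{orderthree} converges a.s.\ to
$$
\mathbb{P}[X_1=X_i=X_j=1]-K^*_{0,11}K^*_{0,ii}K^*_{0,jj}+K^*_{0,11}(K^*_{0,ij})^2+K^*_{0,ii}(K^*_{0,1j})^2+K^*_{0,jj}(K^*_{0,1i})^2,
$$
which, using $\mathbb{P}[X_1=X_i=X_j=1]=\det K^*_{0,\{1,i,j\}}$ and the expansion \eqref{order3}, equals $2K^*_{0,1i}K^*_{0,1j}K^*_{0,ij}$. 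Under Assumption 1 this limit is nonzero, and under Assumption 2 the factors $K^*_{0,1i},K^*_{0,1j}$ are positive, so its sign is exactly $sgn(K^*_{0,ij})$. A real sequence converging to a nonzero limit eventually has the sign of that limit, hence $\widehat{sgn(K^*_{ij})}=sgn(K^*_{0,ij})\in\{-1,1\}$ for all $T$ large enough, which is the second bullet point. Combining with $\widehat{|K^*_{ij}|}\to|K^*_{0,ij}|$ and \eqref{samplethreewise}, we get $\widehat{K^*_{ij}}=\widehat{sgn(K^*_{ij})}\,\widehat{|K^*_{ij}|}\to sgn(K^*_{0,ij})\,|K^*_{0,ij}|=K^*_{0,ij}$ a.s.; together with the diagonal convergence this yields $\widehat{K^*_T}\to K^*_0$ entrywise, hence in every matrix norm.

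The only delicate point is the discontinuity of $sgn$ at $0$: the continuous mapping theorem does not apply there, so one genuinely needs the two limiting quantities that govern the square root and the sign to be bounded away from $0$. This is precisely what Assumptions 1 and 2 buy us, via \eqref{identifypairwise} and \eqref{order3}, which pin those limits down as $(K^*_{0,ij})^2$ and $2K^*_{0,1i}K^*_{0,1j}K^*_{0,ij}$ respectively; everything else is the SLLN plus continuity of the square root and of polynomials.
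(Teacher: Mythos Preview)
Your proof is correct and follows the same route the paper indicates: the paper simply appeals to the strong law of large numbers (the sentence preceding the proposition) without spelling out details, and your argument is exactly the natural elaboration of that appeal. Your explicit identification of the limit of the sign argument as $2K^*_{0,1i}K^*_{0,1j}K^*_{0,ij}$ via \eqref{order3}, and the observation that this is nonzero under Assumption~1 so that the discontinuity of $sgn$ at $0$ is avoided, makes rigorous precisely the step the paper leaves implicit.
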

 	 
 	 As a comparison,  the consistency result of most learning algorithms in the literature, such as the moment-based learning algorithm of \cite{urschel2017learning},  is not known\footnote{See, e.g.,  \cite{hu2023asymptotic} for the proof of the almost sure convergence of the maximum likelihood estimator of $\Sigma$ parameter of the DPP. }. 
 	 
 	  The estimator $\widehat{K_T^*}$ of $K_0^*$ is symmetric by definition, but does not necessarily satisfy the constraint of all its eigenvalues lying strictly between 0 and 1.   
 %	  by construction, nor are its eigenvalues smaller than 1.  
 	  However, by the law of large numbers,  it is straightforward to show that, as $T$ increases to infinity, the estimator converges almost surely to its pseudo true value, which satisfies these constraints.  In other words, for large  $T$,  the estimator is expected to also satisfy these  constraints.

The expressions of $\widehat{K_{ij}^*}$ are simple functions of the sample cross moments of the observations, %They are much simpler than, say, the estimator of \cite{urschel2017learning}).  
and  these functions are continuously differentiable in a neighborhood of the pseudo true value $K_0^*$. By Proposition 4, for large $T$, $\hat{K}_T^*$ is also close to $K_0^*$.  % that is,  when $n$ is large and $\widehat{K_{ii}}$ and $\widehat{K_{ij}}$ are close to $K^*_{0, ii}$ and $K^*_{0,ij}$, respectively.  
As a consequence,  by the central limit theorem and the delta method, we deduce that:

 \begin{prop}
 \label{normality}
 As $T$ increases to infinity, the distribution of $\sqrt{T} (\widehat{K^*_T}-K_0^*)$ converges to a multivariate normal distribution.
% The estimator $\widehat{K^*_T}$ is  asymptotically normally distributed and converges to the pseudo true value $K_0^*$ at the parametric rate $\sqrt{T}$. 
 \end{prop}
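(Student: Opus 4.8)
The plan is to express the vector of on-and-above-diagonal entries of $\widehat{K_T^*}$, viewed as a point of $\mathbb{R}^{d(d+1)/2}$, as a continuously differentiable transformation of one vector of empirical averages, and then to combine the multivariate central limit theorem with the delta method. The only genuinely delicate point is the discontinuity of the $sgn(\cdot)$ map appearing in \eqref{orderthree}; it is disposed of using the almost sure recovery of signs already proved in Proposition~\ref{consistency}.

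First I would collect the relevant sample moments into a single vector. Let $Y_t$ stack the coordinates $X_{t,i}$ for $i=1,\dots,d$, $X_{t,i}X_{t,j}$ for $i<j$, and $X_{t,1}X_{t,i}X_{t,j}$ for $1<i<j\le d$. These are i.i.d. and bounded, each coordinate being $\{0,1\}$-valued, hence square-integrable; writing $\widehat m_T=\frac1T\sum_{t=1}^T Y_t$ and $m_0=\mathbb{E}[Y_1]$, the multivariate CLT gives $\sqrt{T}(\widehat m_T-m_0)\xrightarrow{d}\mathcal N(0,V)$ with $V$ the finite covariance matrix of $Y_1$. Next, note that every quantity in \eqref{samplemargin}, \eqref{samplepairwise} and \eqref{orderthree} is a fixed polynomial in the coordinates of $\widehat m_T$: the diagonal entries $\widehat{K^*_{ii}}$ are coordinates of $\widehat m_T$, and $\widehat{-Cov}(X_i,X_j)=\widehat{K^*_{ii}}\,\widehat{K^*_{jj}}-\overline{X_iX_j}$ is quadratic in $\widehat m_T$, with value $K^*_{0,ii}K^*_{0,jj}-\mathbb{P}[X_i=X_j=1]=(K^*_{0,ij})^2>0$ at $m_0$ by Assumption~2 and \eqref{identifypairwise}, so $m\mapsto\sqrt{\,\widehat{-Cov}(X_i,X_j)\,}$ is $C^1$ near $m_0$. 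Now fix the sign: let $\phi$ be the map whose diagonal components are the corresponding coordinate projections and whose $(i,j)$-component ($i<j$) is $sgn(K^*_{0,ij})\,\sqrt{\,\widehat{-Cov}(X_i,X_j)\,}$, with the deterministic constant $sgn(K^*_{0,ij})\in\{-1,1\}$ from Proposition~3. Then $\phi$ is $C^1$ on a neighbourhood of $m_0$ and $\phi(m_0)=K_0^*$.

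The two objects $\widehat{K_T^*}$ and $\phi(\widehat m_T)$ differ only in the sign factor, and with probability tending to one they coincide. Indeed, the argument of $sgn(\cdot)$ in \eqref{orderthree}, evaluated at the population moments, equals $2K_{1i}K_{1j}K_{ij}$ by \eqref{cmf} and \eqref{order3}, which is nonzero since $K_{1i},K_{1j}>0$ (Assumption~2) and $K_{ij}\ne0$ (Assumption~1); hence, by Proposition~\ref{consistency}, almost surely $\widehat{sgn(K^*_{ij})}=sgn(K^*_{0,ij})$ for all $i<j$ once $T$ is large, so $\mathbb{P}(A_T)\to1$ for the event $A_T$ on which this holds. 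On $A_T$ we have exactly $\widehat{K_T^*}=\phi(\widehat m_T)$, whence $\sqrt{T}\,(\widehat{K_T^*}-\phi(\widehat m_T))\to0$ in probability. Applying the delta method to the CLT of the previous paragraph gives $\sqrt{T}(\phi(\widehat m_T)-K_0^*)\xrightarrow{d}\mathcal N(0,JVJ^{\top})$, where $J$ is the Jacobian of $\phi$ at $m_0$, and Slutsky's lemma then yields $\sqrt{T}(\widehat{K_T^*}-K_0^*)\xrightarrow{d}\mathcal N(0,JVJ^{\top})$, which is the assertion (the limiting covariance may be degenerate, which the statement allows).

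The main obstacle, as flagged, is the $sgn$ map: being discontinuous, it rules out a direct application of the delta method to \eqref{samplethreewise} as written. The way around it is the observation that the population quantity inside $sgn(\cdot)$ is bounded away from zero, so that asymptotically the sign is a fixed constant and $\widehat{K_T^*}$ agrees with the differentiable map $\phi$ of the sample moments; this reduces the problem to a routine application of the CLT and the delta method. A secondary point — that the square root in \eqref{samplepairwise} is applied to a strictly positive population quantity, and is therefore differentiable there — is again guaranteed by Assumption~2.
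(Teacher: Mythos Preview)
Your proof is correct and follows the same route sketched in the paragraph preceding Proposition~\ref{normality}: collect the relevant sample cross moments into a vector, apply the multivariate CLT, and then the delta method. Your explicit handling of the $sgn$ discontinuity---freezing the sign at its population value $sgn(K_{0,ij}^*)$ and invoking Proposition~\ref{consistency} plus Slutsky---makes precise what the paper leaves implicit when it calls the estimator a ``continuously differentiable'' function of the sample moments; the only quibble is that $(K_{0,ij}^*)^2>0$ for general $i<j$ is guaranteed by Assumption~1 rather than Assumption~2.
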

 	 
Propositions \ref{consistency} and \ref{normality} differ with previous consistency and normality results of the MLE \citep{brunel2017maximum, hu2023asymptotic}, in that both of these two papers consider maximum likelihood estimator without the identification Assumption 2.  As a consequence,  the likelihood function they propose to maximize could have $2^{d-1}$ global maxima,  and hence the MLE $\hat{K}_{MLE}$ is not assured to converge pointwise to ${K^*_0}$,  nor to ${K_0}$.  Indeed, only $D\hat{K}_{MLE}D$ converges to ${K^*_0}$,  for a certain,  unknown diagonal matrix $D$ with $\pm 1$ on the diagonal.  This makes it difficult to use their asymptotic results to conduct inference. \footnote{For instance,  \cite{hu2023asymptotic}, eq.(3.16)-eq.(3.17) define the MLE ``which is the closest to the given true value ($K_0$)",  to derive the first-order expansion and the asymptotic normality.  However,  this Hu-Shi MLE will depend on the unknown true value,  and then cannot be an estimator. } As a comparison, the asymptotic results derived in Propositions  \ref{consistency} and \ref{normality} do not involve any unknown diagonal matrix,  and can be used to derive the asymptotic behavior of the identified set,  as well as of any identifiable functions of $K^*_0$ (resp. of $K_0$). 
 	 
 $i)$	 Let us first consider the identified set. We have: 
 	 $$
 	 IS_0=\{K: K=D K_0 D, \quad D \text{ varying } \}= \{K: K=D K_0^* D, \quad D \text{ varying} \}.
 	 $$
 	 Its estimator is:
 	 $$
 	 \widehat{IS}_T=\{ \widehat{K(D)}_T: \widehat{K(D)}_T= D \widehat{K_T^*} D,   \quad D \text{ varying}  \}.
 	 $$
 	 
 	 This is a finite set with $2^{d-1}$ elements, which is consistent and asymptotically normally distributed, in the sense that the $2^{d-1}$ dimensional random vector with components $\widehat{K(D)}_T$, $D$ ordered, is multivariate normal. Note that this multivariate distribution (on the space of symmetric matrices) is degenerate since all $\widehat{K(D)}_T$ depends on the single matrix factor $\widehat{K^*_{T}}$.
 	 
 	 $ii)$ These asymptotic results can also be used to derive the consistency and asymptotic normality of any identifiable function of $K_0$, that is a function $\beta_0=\beta(K_0)$, where the function $\beta$ is invariant by D-similarity, that is such that $\beta(K)=\beta(K_0)$ for any $K \in IS_0$. 
 	 
 	 The asymptotic behaviors are more difficult to analyze, when the transformation $\beta(\cdot)$ is not continuously differentiable. This arises,  for instance,  when we look for the set of modes of the DPP distribution, that is when the DPP model is a building block in image analysis by multi-modal generative model.

 \paragraph{Choice of the first component.}  The estimator proposed above depends on which variable is indexed as $X_1$.  Depending on this choice,  we would get up to $d$ pseudo true values $K_0^*$,  if all elements of $K_0$ are nonzero.  Indeed,  if the second row of $K_0$ also contains only nonzero elements, then we could have inverted the first and second rows and columns of $K_0$.  This is equivalent to an estimator of $K^{**}_0$, which is D-similar to both $K_0$ and $K^*_0$, and whose second row is positive. Similarly, this estimator of $K^{**}_0$ is based on the sample principal minors of order 1, 2, as well as the principal minors of the form, say, $\det K_{\{2, i, j\}}$, where $i<j$ and $i \neq 2, j \neq 2$.  %Moreover,  instead of assuming that the first row of $K$ is positive, we could assume that the second row of $K$ is positive.  
 As a consequence,  the new estimator $\hat{K}^{**}_T$ will converge to another pseudo true kernel $K_0^{**}$. %, obtained by flipping the signs of certain rows and certain columns of $K_0$ and obtaining a positive second row.  
 How are these estimators related?  We have the next proposition. 
 \begin{prop}
 For $T$ large enough, the different estimators obtained through different indexation of the binary variables only differ by a D-similarity.  That is,  there exists a diagonal matrix $D$ with $\pm 1$ diagonal terms such that $\widehat{K}^*_T =D \widehat{K}^{**}_T D$.  
 \end{prop}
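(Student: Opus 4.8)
The plan is to reduce the statement to a combinatorial fact about the signs of the off-diagonal entries of the two estimators. First I would observe that, whatever the distinguished index, the diagonal entries of the estimator are the sample marginals $\frac1T\sum_t X_{t,i}$ and the absolute values of the off-diagonal entries are the $\sqrt{\widehat{-Cov}(X_i,X_j)}$, and that neither of these depends on the choice of distinguished index: only the order-$3$ sample minors $\det K_{\{1,i,j\}}$ (resp. $\det K_{\{2,i,j\}}$), which are used solely to fix signs, involve it. Hence $\widehat{K}^*_T$ and $\widehat{K}^{**}_T$ can differ only through the signs of their off-diagonal entries, and proving the proposition amounts to producing, for $T$ large, a $\pm1$ diagonal matrix $D$ with $sgn(\widehat{K^*_{T,ij}})=d_{ii}d_{jj}\,sgn(\widehat{K^{**}_{T,ij}})$ for all $i\neq j$.

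Next I would pin down those signs for $T$ large. By Proposition~\ref{consistency}, almost surely for $T$ large the quantities $\widehat{-Cov}(X_i,X_j)$ are strictly positive (their limits are $(K^*_{0,ij})^2>0$ under Assumption~1), so every off-diagonal entry of both estimators is nonzero, the sign patterns are well defined, and $sgn(\widehat{K^*_{T,ij}})=sgn(K^*_{0,ij})$ for all $i,j$. The estimator built from the second index is the very same construction after relabeling the coordinates, so Proposition~\ref{consistency} applied to it gives $sgn(\widehat{K^{**}_{T,ij}})=sgn(K^{**}_{0,ij})$ for $T$ large. Since $K^*_0$ and $K^{**}_0$ both lie in $IS_0$, they are $D$-similar, say $K^*_0=D_0K^{**}_0D_0$, so $sgn(K^*_{0,ij})=(D_0)_{ii}(D_0)_{jj}\,sgn(K^{**}_{0,ij})$; taking the product of signs over the three edges of a triangle $\{1,i,j\}$, the $D_0$ factors cancel in pairs. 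Combining this with the sign identifications above, for $T$ large the sign patterns of $\widehat{K}^*_T$ and $\widehat{K}^{**}_T$ assign the same sign to every triangle through the vertex $1$.

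Finally I would exhibit $D$ explicitly: put $d_{11}=1$ and $d_{jj}:=sgn(\widehat{K^*_{T,1j}})\,sgn(\widehat{K^{**}_{T,1j}})\in\{-1,1\}$ for $j\geq 2$. For the entries $(1,j)$, the equality $|\widehat{K^{**}_{T,1j}}|=|\widehat{K^*_{T,1j}}|$ gives at once $d_{11}d_{jj}\widehat{K^{**}_{T,1j}}=sgn(\widehat{K^*_{T,1j}})\,|\widehat{K^*_{T,1j}}|=\widehat{K^*_{T,1j}}$. For an entry $(i,j)$ with $i,j\neq 1$, the definition of $d_{ii},d_{jj}$ together with the triangle identity of the previous step yields $d_{ii}d_{jj}=sgn(\widehat{K^*_{T,ij}})\,sgn(\widehat{K^{**}_{T,ij}})$, and one more use of $|\widehat{K^{**}_{T,ij}}|=|\widehat{K^*_{T,ij}}|$ gives $d_{ii}d_{jj}\widehat{K^{**}_{T,ij}}=\widehat{K^*_{T,ij}}$. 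Hence $\widehat{K}^*_T=D\widehat{K}^{**}_TD$ for $T$ large; the same argument applies to any pair of distinguished indices, so all such estimators lie in a single $D$-similarity class.

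I expect the only real obstacle to be the bookkeeping around ``$T$ large enough'': one has to invoke Proposition~\ref{consistency} for both indexations at once, to know simultaneously that all off-diagonal sample entries are nonzero (so the sign patterns exist) and that they reproduce the signs of $K^*_0$ and $K^{**}_0$. Once that is secured, the remaining content is the elementary observation that two $\pm1$-signings of the complete graph that agree on every triangle through a fixed vertex are diagonally congruent, which is exactly what the explicit construction of $D$ verifies.
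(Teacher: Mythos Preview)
Your proposal is correct and follows essentially the same three-step logic as the paper's own proof: (i) the two estimators share diagonals and off-diagonal magnitudes; (ii) for $T$ large their sign patterns coincide with those of $K_0^*$ and $K_0^{**}$ respectively; (iii) these pseudo-true kernels are $D$-similar by construction. The only difference is that your third paragraph reconstructs the matching $D$ explicitly from the sample signs via the triangle identity, whereas the paper simply takes $D=D_0$ coming from $K_0^*=D_0K_0^{**}D_0$ and lets (i)--(ii) do the rest; your extra construction is valid but not needed.
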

 This is simply due to the fact that $i)$ two estimators have the same diagonal terms, as well as the same magnitude for off-diagonal terms; $ii)$ for large $T$, each estimator also has exactly the same signs for its off-diagonal terms as its associated pseudo true value, such as $K_0^*$ and $K_0^{**}$; $iii)$ the different pseudo true values such as $K_0^*$ and $K_0^{**}$ are by definition D-similar.  
 
 As a consequence of this proposition, different estimators obtained for different choice of indexation are observationally equivalent for large $T$.  Thus the choice of indexation has asymptotically no impact on the estimate of the identified set and of the distribution of the random set $S$.

 \subsection{Estimation of a constrained DPP model}
 As explained in Section 4,  for a constrained parametric model,  we have to distinguish two cases, depending on whether for a certain diagonal matrix $D$ with $\pm 1$ on the diagonal,  $DK(\theta)D$ is still of the form $K(\theta')$ for some $\theta'$.

 \paragraph{Case 1. } Let us first consider a well-specified constrained model in which $\theta_0$ and $K(\theta_0)=K_0$ are identifiable. In other words, this true value is such that its first row is positive, and then $K_0^*=K_0$. By applying the closed form formulas, the unconstrained estimator $\widehat{K^*_T}$ will now converge almost surely to $K_0=K(\theta_0)$,  at rate $\sqrt{T}$ and be asymptotically normal. 
 
 Then this estimator can be used to estimate $\theta_0$ and the constrained kernel as follows:
 
 \begin{itemize}
 	\item Step 1: Define the moment estimator of $\theta_0$ as: 
 	$$
 	\hat{\theta}_T:=\arg \min_\theta [vech(\widehat{K^*_T})-vech(\widehat{K(\theta)})]'\Omega [vech(\widehat{K^*_T})-vech(\widehat{K(\theta)})],
 	$$
 	where $vech(\cdot)$ is the half-vectorization operator of a symmetric matrix and $\Omega$ is a $(\frac{d(d+1)}{2},\frac{d(d+1)}{2})$ SPD weighting matrix. 
 	\item Step 2: An estimator of the constrained kernel is $\widehat{\widehat{K_T}}=K(\hat{\theta}_T)$. 
 \end{itemize}
The asymptotic behaviors of these estimators (i.e. convergence and asymptotic normality) are easily deduced, if function $K(\cdot)$ is continuous and differentiable,  by applying the delta method. 
 
  \paragraph{Case 2.} Let us now consider the case where $K_0^*$ is not of the form $K(\theta_0^*)$ for a certain $\theta_0^*$.  Then in the procedure above,  after computing the unconstrained estimator $\widehat{K^*_T}$,  the minimization in step 1 should be replaced by:
  	$$
 	\hat{\theta}_T=\arg \min_{\theta} \min_D [vech(D\widehat{K^*_T}D)-vech({K(\theta)})]'\Omega [vech(D\widehat{K^*_T}D)-vech({K(\theta)})].
 	$$ 
This is a joint optimization with respect to both matrix $D$ and parameter $\theta$.  Finally,  Step 2 remains unchanged.

  \section{Comparison with the literature}
  \subsection{An alternative estimation approach}
 \cite{urschel2017learning} introduced an alternative estimation approach presented as ``the first provable guarantees for learning the parameters of a DPP". The approach is based on the assumption that
     %The condition in Assumption is stronger than
      the graph induced by $K_0$ is irreducible [\cite{rising2013advances,rising2015efficient, brunel2017rates}], which means that, for any pair $i ,j$,  there exist different vertex $k_1,...,k_q$ such that:
      $$K_{0, i, k_1} \neq 0, K_{0, k_1, k_2} \neq 0,...K_{0, k_q, j} \neq 0.$$
      Then they considered  the cycle sparsity of the graph, that is the smallest integer $\ell_0$ such that the cycles of length at most $\ell_0$ yield a basis for the cycle space of the graph.

  %in other words,  any two vertex $i$ and $j$ are connected,  where
  For instance,  if $d=4$, and $K_{12}, K_{23}, K_{34}, K_{41}$ are nonzero, while $K_{13}, K_{24}$ are zero,  then the DPP model is irreducible.  Such a model does not satisfy Assumption 1.  In this section, we explain why the estimation of the DPP under a special irreducible case is based on the same principle.
  
 \subsection{Estimation under a special irreducibility assumption}
Let us now replace Assumption 1 by a weaker assumption:
         \begin{Ass}
 	At least one row of the true kernel $K_0$ contains nonzero elements only.  Without loss of generality, we assume that the first row of $K_0$ contains nonzero elements only. 
 %	All elements of $K_0$ are nonzero. 
 \end{Ass}
Under Assumption 3, we can still make Assumption 2. %As a consequence, Proposition 3 also holds. 
% Then Corollary 1 still holds, and . 
However,  because $K_{0,ij}$ can now equal zero, or equivalently $K^*_{0,ij}$ can equal zero for any $D-$similar kernel $K^*_0$, Proposition 3 has to be modified, along with its estimator, to account for a new definition of $sgn(K_{0,ij})$ that can now be either $+1$, $-1$, or zero by convention, if $K_{0,ij}=0$. More precisely, since in estimator \eqref{samplepairwise}, the term below the square root, i.e.  $-\widehat{Cov}(X_i, X_j)$, converges to zero and is asymptotically normal, for large $T$, there is roughly a 50 \% chance that $-\widehat{Cov}(X_i, X_j)$ is negative. As a consequence, estimator \eqref{samplepairwise} should be replaced by:
 \begin{equation}
 	\label{newamplitude}
   \widehat{| K_{ij}^* | }:= \sqrt{\max[0,\widehat{-Cov}(X_i, X_j)]}. 
\end{equation}
 This estimator is to be compared with the estimator of \cite{urschel2017learning} for $|K_{ij}^* |$, which assigns value zero to $| K_{ij}^* |$ so long as it is smaller than a \textit{predetermined} threshold $\alpha$. However, they do not explain how the value of $\alpha$ should be selected in practice, and in particular, how it depends on $d$ and/or $T$. 
 
 Clearly, the new estimator $|\widehat{K_{ij}^*} |$ given in \eqref{newamplitude} still converges to $|K_{0,ij}^*|$, for any $i, j=1,...,d$. However, the asymptotic distribution of this estimator is no longer normal, if $K_{0,ij}^*=0$. Indeed,  in this case $\sqrt{T} \max[0,\widehat{-Cov}(X_i, X_j)]$ converges to a mixture distribution with two components. The first discrete component, with weight  $\frac{1}{2}$, is the point mass at zero; whereas the second component is the distribution of $|Z|$, where $Z$ follows a normal distribution. %Thus the asymptotic distribution of $\sqrt{T} \max[0,\widehat{-Cov}(X_i, X_j)]$ is a mixture of the point mass at zero with weight $\frac{1}{2}$ and the distribution $\sqrt{|Z|}$, where $Z$ follows a normal distribution. 
 
Similarly, the estimator of $sgn(K^*_{ij})$ should be modified as follows, depending on the value of $  \widehat{| K_{ij}^* | }$ found in eq. \eqref{newamplitude}:
 \begin{itemize}
 \item if $\widehat{| K_{ij}^* | }=0$, then we define $\widehat{sgn(K^*_{ij})}=0$ and $\widehat{K_{ij}^* }=0$. 
 \item if $\widehat{| K_{ij}^* | }>0$, then $\widehat{sgn(K^*_{ij})}$ and $\widehat{K_{ij}^* }$ continues to be defined by eq.\eqref{samplethreewise}.
 \end{itemize}
It is easily checked that we still have $\widehat{K^*_T} \rightarrow K^*_0$ almost surely. 
  
  \subsection{Large deviation}
  A large part of \cite{urschel2017learning}'s analysis of their estimator is in deriving an upper bound for a distance between the estimator and the true identified set. More precisely, let us denote the pseudo-distance:
  $$
  \rho(K, K');=\min_D |DKD-K'|_\infty,
  $$
  between two kernels $K, K'$, where $|K|_\infty=\max_{ij}|K_{ij}|$. We are interested in an upper bound for $ \rho(K_0, \widehat{K^*_T})$, where:
  \begin{equation}
  	\label{pseudodistance}
  \rho(K_0, \widehat{K^*_T}) \leq |K_0^*- \widehat{K^*_T}|_\infty,
\end{equation}
by definition. The following proposition relies on the fact that the estimator introduced in Proposition 3 is a continuously differentiable function of the sample moments to give a bound for the probability that this distance $|K_0^*- \widehat{K^*_T}|_\infty,$ exceeds a certain threshold. 

 \begin{prop}
 	\label{largedeviation}
 	For any given $\epsilon>0$, there exists a constant $\eta(\pi_0, \epsilon)>0$ such that for any $T$, we have: 
 	\begin{equation}
 		\label{bound}
\mathbb{P}[ \rho(K_0, \widehat{K^*_T}) > \epsilon] \leq 2\sum_{h=1}^{d^2-d+1}  e^{-[\eta(\pi_0, \epsilon) \ln\frac{\eta(\pi_0, \epsilon)}{\pi_{0,h}}+(1-\eta(\pi_0, \epsilon)) \ln\frac{1-\eta(\pi_0, \epsilon)}{1-\pi_{0,h}}]T}.
\end{equation}
\end{prop}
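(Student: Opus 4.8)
The plan is to strip away the matrix geometry, reduce the event in \eqref{bound} to simultaneous deviations of finitely many empirical Bernoulli frequencies, and then invoke the classical Chernoff (Cram\'er) bound for i.i.d.\ Bernoulli averages, whose rate function is exactly the relative entropy displayed in \eqref{bound}. \textbf{Step 1 (reduction to a sup-norm deviation).} By the definition of $\rho$ together with \eqref{pseudodistance}, $\{\rho(K_0,\widehat{K^*_T})>\epsilon\}\subseteq\{|\widehat{K^*_T}-K_0^*|_\infty>\epsilon\}$, so it suffices to bound $\mathbb{P}[|\widehat{K^*_T}-K_0^*|_\infty>\epsilon]$.

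\textbf{Step 2 (the estimator as a function of $d^2-d+1$ frequencies).} Gather into a vector $\widehat{\pi}_T=(\widehat{\pi}_{T,1},\dots,\widehat{\pi}_{T,d^2-d+1})$ the sample probabilities that feed \eqref{samplemargin}--\eqref{orderthree}, namely the $\widehat{\mathbb{P}}[X_i=1]$, the $\widehat{\mathbb{P}}[X_i=X_j=1]$ with $i<j$, and the $\widehat{\mathbb{P}}[X_1=X_i=X_j=1]$ with $1<i<j$; there are $d+\binom d2+\binom{d-1}2=d^2-d+1$ of them, exactly the generating set of Proposition 2. Each component is $\widehat{\pi}_{T,h}=\frac1T\sum_{t=1}^T Y_{t,h}$ with $Y_{t,h}$ i.i.d.\ Bernoulli of parameter $\pi_{0,h}$, the corresponding true probability, and by Proposition 3 we have $\widehat{K^*_T}=\Psi(\widehat{\pi}_T)$ and $K_0^*=\Psi(\pi_0)$ for the explicit map $\Psi$ assembled from \eqref{identifymargin}--\eqref{identifythreewise}. \textbf{Step 3 (local control of $\Psi$).} Although $\Psi$ contains the non-smooth operations $\sqrt{\,\cdot\,}$ and $sgn(\cdot)$, at $\pi_0$ the quantities these act on are bounded away from the singular loci: under Assumptions 1--2 the radicands equal $-Cov(X_i,X_j)=K^{*2}_{0,ij}>0$ and the sign arguments equal $K^*_{0,ij}K^*_{0,1i}K^*_{0,1j}\neq 0$, both strictly. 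Hence $\Psi$ is continuously differentiable on an open neighborhood $\mathcal N$ of $\pi_0$ (the relevant sign branches being locally constant), hence Lipschitz there with some constant $L=L(\pi_0)$ --- continuity alone already suffices. Choose $\delta=\delta(\pi_0,\epsilon)>0$ with $\{\pi:|\pi-\pi_0|_\infty\le\delta\}\subseteq\mathcal N$ and $L\delta<\epsilon$. Then
\begin{equation*}
\{|\widehat{K^*_T}-K_0^*|_\infty>\epsilon\}\ \subseteq\ \{|\widehat{\pi}_T-\pi_0|_\infty>\delta\}\ \subseteq\ \bigcup_{h=1}^{d^2-d+1}\{|\widehat{\pi}_{T,h}-\pi_{0,h}|>\delta\},
\end{equation*}
so that $\mathbb{P}[|\widehat{K^*_T}-K_0^*|_\infty>\epsilon]\le\sum_{h=1}^{d^2-d+1}\mathbb{P}[|\widehat{\pi}_{T,h}-\pi_{0,h}|>\delta]$.

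\textbf{Step 4 (Chernoff bound for Bernoulli averages).} For each $h$, split $\{|\widehat{\pi}_{T,h}-\pi_{0,h}|>\delta\}$ into an upper and a lower deviation, which produces the factor $2$. The standard Chernoff/Cram\'er bound for an average of i.i.d.\ Bernoulli$(\pi_{0,h})$ variables gives $\mathbb{P}[\widehat{\pi}_{T,h}\ge\pi_{0,h}+\delta]\le e^{-T\,d_{KL}(\pi_{0,h}+\delta\,\|\,\pi_{0,h})}$ and $\mathbb{P}[\widehat{\pi}_{T,h}\le\pi_{0,h}-\delta]\le e^{-T\,d_{KL}(\pi_{0,h}-\delta\,\|\,\pi_{0,h})}$, where $d_{KL}(a\|b)=a\ln\frac ab+(1-a)\ln\frac{1-a}{1-b}$. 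Thus each summand is at most $2\exp\{-T[\eta\ln\frac{\eta}{\pi_{0,h}}+(1-\eta)\ln\frac{1-\eta}{1-\pi_{0,h}}]\}$ once $\eta=\eta(\pi_0,\epsilon)$ is taken to realise the weaker of the two one-sided rates uniformly over the finitely many $h$ (absorbing the residual $h$-dependence of the endpoints $\pi_{0,h}\pm\delta$ into the single constant $\eta(\pi_0,\epsilon)$ is routine and only mildly lossy). Summing over $h=1,\dots,d^2-d+1$ yields \eqref{bound}.

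\textbf{The main obstacle} is Step 3: the estimator is not a globally smooth --- nor even globally continuous --- function of the sample moments, because of the $\sqrt{\max(0,\cdot)}$ and $sgn(\cdot)$ operations, so one must argue that, in a whole neighborhood of the true moment vector, every radicand stays strictly positive and every sign expression stays strictly away from $0$. This is precisely where Assumptions 1 and 2 enter, and it is what localises the problem so that the continuity/Lipschitz reduction is legitimate; Steps 1, 2 and 4 are then routine, the only remaining bookkeeping being the passage from the $h$-dependent deviation endpoints to the single rate constant $\eta(\pi_0,\epsilon)$ in \eqref{bound}.
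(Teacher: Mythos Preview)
Your proof is correct and follows essentially the same route as the paper's: reduce via \eqref{pseudodistance} to a sup-norm deviation of $\widehat{K^*_T}$, express the estimator as a continuous function of the $d^2-d+1$ Bernoulli frequencies, use continuity at $\pi_0$ to obtain a threshold $\eta(\pi_0,\epsilon)$, apply a union bound, and finish with the Chernoff/large-deviation bound for Bernoulli averages. Your Step~3 is in fact more explicit than the paper's (which simply asserts continuity of the map $k$ near $\pi_0$) in isolating where the potential non-smoothness of $\sqrt{\cdot}$ and $sgn(\cdot)$ sits and why Assumptions~1--2 keep you away from it; this is a welcome clarification but not a different argument.
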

\begin{proof}
	See Appendix. 
\end{proof}

In machine learning, this upper bound is sometimes used for determining a minimal size $T^*$ of the sample to reach this $\epsilon-$ performance of the estimator [see, e.g., \cite{urschel2017learning}, Thm 1]. It is obtained by inverting the upper bound with respect to $T$ and is often called sample complexity. Note that it depends on the dimension $d$ through the term $d^2-d+1$, but also by means of $\eta(\pi_0, \epsilon)$. 

\begin{rem}
Proposition 7 is based on a large deviation principle, which says that  inequality \eqref{bound} is sharp,  in the sense that we can not replace the coefficient $[\eta(\pi_0, \epsilon) \ln\frac{\eta(\pi_0, \epsilon)}{\pi_{0,h}}+(1-\eta(\pi_0, \epsilon)) \ln\frac{1-\eta(\pi_0, \epsilon)}{1-\pi_{0,h}}]$ in front of $T$ by a smaller one. For instance, in the Appendix, we explain that if we apply Hoeffding's inequality as in Theorem 1 of \cite{urschel2017learning}, we get a larger upper bound.  
\end{rem}

  \section{Concluding remarks}
 It is often believed that the DPP models, characterized by the correlation kernels, are difficult to estimate. In particular, the estimators proposed in the literature, including the moment based algorithm of \citep{urschel2017learning}, can be costly in terms of computation time, and their statistical properties are not well known. This paper has introduced a simple, closed form alternative estimator of the correlation kernel that converges pointwise to a pseudo-true kernel in the identified set, and is asymptotically normal. %This is an alternative to the closed form moment estimator based on cycles \citep{urschel2017learning}, which is more complicated to compute and for which asymptotic distributions have not been derived yet. 
%Then we can first estimate an unconstrained model $\hat{K}$ as above, then ``calibrate" to equation \eqref{equi} through some minimization of the matrix norm $||\hat{K}^-K(\theta) ||$ where $\theta=(\rho, \sigma^2)'. $

Our approach can be easily extended to other types of DPP model as the Markov DPP model that appear in recurrent stochastic neural network for multivariate binary time series \citep{gourierouxmarkovdpp}.
 \bibliographystyle{apalike}
\bibliography{lib}

 \appendices
\appendix 
\section{Proof of Proposition 7}
Let us denote by $\pi$ the vector which stacks the $d$ marginal probabilities $\mathbb{P}[X_i=1]$,  the $\frac{d(d-1)}{2}$ pairwise probabilities $\mathbb{P}[X_i=X_j=1],  i <j $, as well as the $\frac{(d-1)(d-2)}{2}$ threewise probabilities $\mathbb{P}[X_1=X_i=X_j=1], 1<i<j \leq d$\footnote{In other words, $\pi$ is of dimension $d+\frac{d(d-1)}{2}+\frac{(d-1)(d-2)}{2}=d^2-d+1$.} %. used for defining the kernel.  
We also denote by $\pi_0$ its true value and $\widehat{\pi_T}$ its sample counterpart. Then we have: 
\begin{equation}
	\label{afunction}
	K^*_0=k(\pi_0), \qquad \widehat{K^*}_T=k(\widehat{\pi_T}),
\end{equation}
where $k(\cdot)$ is the function given in Proposition 3,  which expresses $K^*_{ij}$ as a function of these probabilities.  Clearly,  this function $k(\cdot)$ is continuous in a neighborhood of the true value $\pi_0$. 

We can now use either the large deviation principle  or the law of iterated logarithm  applied to the components of $\widehat{\pi_T}$ to get bounds.   Indeed,  by equations \eqref{pseudodistance}-\eqref{afunction}, we have:
$$
| K^*_0-\widehat{K^*}_T|_\infty=  |k(\pi_0) -k(\widehat{\pi_T})|.
$$
By the continuity of function  $k(\cdot)$,  so long as $\widehat{\pi_T}$ is close enough to $\pi_0$,  the value of the function $k(\widehat{\pi_T})$ will be close enough to $k(\pi_0)$.  More precisely, for any $\epsilon>0$,  there exists a positive threshold $\eta(\pi_0, \epsilon)$ such that:
$$
|\pi_0-\widehat{\pi_T}  |_\infty \leq \eta(\pi_0, \epsilon) \Longrightarrow  |k(\pi_0) -k(\widehat{\pi_T})|_\infty\leq \epsilon,
$$
or equivalently:
$$
|k(\pi_0) -k(\widehat{\pi_T})|_\infty >  \epsilon \Longrightarrow  |\pi_0-\widehat{\pi_T}  |_\infty > \eta(\pi_0, \epsilon). 
$$
Thus we get:
\begin{align*}
	\mathbb{P}[ \rho(K_0, \widehat{K^*_T}) > \epsilon]&  \leq \mathbb{P}[|K_0^*- \widehat{K^*_T}|_\infty>\epsilon] \\
	& \leq \mathbb{P}[ |\pi_0-\widehat{\pi_T}  |_\infty > \eta(\pi_0, \epsilon)] \\
	& \leq \sum_{h=1}^{d^2-d+1}   \mathbb{P}[ |\pi_{0,h}-\widehat{\pi_{T,h}}  | > \eta(\pi_0, \epsilon)],
\end{align*}
where $\pi_{0,h}$ and $\widehat{\pi_{T,h}}$ are the $h$-th components of vector $\pi_{0}$ and $\widehat{\pi_{T}}$,  respectively. 

Let us now bound each of these $d^2-d+1$ probabilities.  Let us now first use the large deviation.  Since $\widehat{\pi_{T,h}}$ is a sample average of Bernoulli variables,  applying the large deviation principle leads to \citep[eq.2]{nagaev2018large}:
\begin{equation}	
	\label{largedeviation}
	\mathbb{P}[|\pi_{0,h}-\widehat{\pi_{T,h}}  |> \eta(\pi_0, \epsilon) ]  \leq 2 e^{-[\eta(\pi_0, \epsilon) \ln\frac{\eta(\pi_0, \epsilon)}{\pi_{0,h}}+(1-\eta(\pi_0, \epsilon)) \ln\frac{1-\eta(\pi_0, \epsilon)}{1-\pi_{0,h}}]T}.
\end{equation}
where, by convention, if $\pi_{0,h}$ is equal to 0 or 1, then the right hand side of eq.\eqref{largedeviation} is equal to 0.  

Thus $$
\mathbb{P}[ \rho(K_0, \widehat{K^*_T}) > \epsilon] \leq 2\sum_{h=1}^{d^2-d+1}  e^{-[\eta(\pi_0, \epsilon) \ln\frac{\eta(\pi_0, \epsilon)}{\pi_{0,h}}+(1-\eta(\pi_0, \epsilon)) \ln\frac{1-\eta(\pi_0, \epsilon)}{1-\pi_{0,h}}]T}.
$$
\begin{rem}

If instead we apply Hoeffding's inequality as in Theorem 1 of \cite{urschel2017learning}, we would get:
\begin{equation}	
	\label{hoeffding}
	\mathbb{P}[|\pi_{0,h}-\widehat{\pi_{T,h}}  |> \eta(\pi_0, \epsilon) ]  \leq 2 e^{-2\eta(\pi_0, \epsilon)^2 T}.
\end{equation}
For $\eta(\pi_0, \epsilon)$ small enough,  we have: $$[\eta(\pi_0, \epsilon) \ln\frac{\eta(\pi_0, \epsilon)}{\pi_{0,h}}+(1-\eta(\pi_0, \epsilon)) \ln\frac{1-\eta(\pi_0, \epsilon)}{1-\pi_{0,h}}]> \eta(\pi_0, \epsilon)^2.$$
 Hence inequality \eqref{largedeviation} is sharper than \eqref{hoeffding}. 
\end{rem}
\begin{rem}
	Similarly,  
	we can also apply the law of iterated logarithm, which says that for each $h$,  
	\begin{equation}
		\label{iteratedlog}
		\limsup_{T \to \infty}  \frac{|\pi_{0,h}-\widehat{\pi_{T,h}}  |}{\sqrt{\pi_{0,h}(1-\pi_{0,h})} \sqrt{2 \ln \ln T}} \sqrt{T} =1, \quad a.s.,
	\end{equation}
	if $\pi_{0,h}$ is different from 0 and 1, and $\limsup_{T \to \infty}  {|\pi_{0,h}-\widehat{\pi_{T,h}}  |}=0$ if $\pi_{0,h}$ is equal to 0 or 1. 
	
	That is, instead of controlling the probability $\mathbb{P}[ \rho(K_0, \widehat{K^*_T}) > \epsilon]$,  eq.\eqref{iteratedlog} provides an almost sure bound for the absolute error $|\pi_{0,h}-\widehat{\pi_{T,h}}  |$. 
	%Indeed, the only potential points of discontinuity of this function is where the sign of some off-diagonal elements $K^*_{ij}$ are zero,  which is excluded under Assumption 1. 
\end{rem}

%\footnote{This means,  the principal minors of orders 1, 2, 3 given in Proposition 2 are non zero,  and non identically equal to 1.  This condition is not a consequence of Assumptions 1 and 2. For instance,  for the principal minors of order 2,  $\det K_{\{i,j\}} \neq 0$ implies that $K_{ij} \neq \pm \sqrt{K_{ii}K_{jj}}$, whereas Assumption 1 says that $K_{ij} \neq 0$.  }  
%we can also apply the large deviation principle and get sharper bounds.  Indeed, 

\end{document}